\renewcommand\footnotetextcopyrightpermission[1]{} 
\begin{document}

\title{An ADMM-Based Universal Framework for Adversarial Attacks on Deep Neural Networks}


\author{{\huge Pu Zhao$^{1}$, Sijia Liu$^2$, Yanzhi Wang$^1$, Xue Lin$^1$}
{\\
$^1$Department of ECE, Northeastern University\\
$^2$MIT-IBM Watson AI Lab, IBM Research AI\\
}} 

\begin{abstract}
Deep neural networks (DNNs) are known vulnerable to adversarial attacks. That is, adversarial examples, obtained by adding delicately crafted distortions onto original legal inputs, can mislead a DNN to classify them as any target labels. In a successful adversarial attack, the targeted mis-classification should be achieved with the minimal distortion added. In the literature, the added distortions are usually measured by $L_0$, $L_1$, $L_2$, and $L_{\infty}$ norms, namely, $L_0$, $L_1$, $L_2$, and $L_{\infty}$ attacks, respectively. However, there lacks a versatile framework for all types of adversarial attacks.

This work for the first time unifies the methods of generating adversarial examples by leveraging ADMM (Alternating Direction Method of Multipliers), 
an operator splitting optimization approach, such that $L_0$, $L_1$, $L_2$, and $L_{\infty}$ attacks can be 
effectively implemented by this general framework with little modifications. Comparing with the state-of-the-art attacks in each category, our ADMM-based attacks are so far the strongest, achieving both the 100\% attack success rate and the minimal distortion. 


\end{abstract}

%
%

\begin{CCSXML}
<ccs2012>
<concept>
<concept_id>10003752.10003809.10003716</concept_id>
<concept_desc>Theory of computation~Mathematical optimization</concept_desc>
<concept_significance>500</concept_significance>
</concept>
<concept>
<concept_id>10010147.10010178.10010224.10010245</concept_id>
<concept_desc>Computing methodologies~Computer vision problems</concept_desc>
<concept_significance>500</concept_significance>
</concept>
<concept>
<concept_id>10010147.10010257.10010293.10010294</concept_id>
<concept_desc>Computing methodologies~Neural networks</concept_desc>
<concept_significance>500</concept_significance>
</concept>
<concept>
<concept_id>10002978.10002986</concept_id>
<concept_desc>Security and privacy~Formal methods and theory of security</concept_desc>
<concept_significance>300</concept_significance>
</concept>
<concept>
<concept_id>10002978.10003022</concept_id>
<concept_desc>Security and privacy~Software and application security</concept_desc>
<concept_significance>300</concept_significance>
</concept>
</ccs2012>
\end{CCSXML}

\ccsdesc[500]{Theory of computation~Mathematical optimization}
\ccsdesc[500]{Computing methodologies~Computer vision problems}
\ccsdesc[500]{Computing methodologies~Neural networks}
\ccsdesc[300]{Security and privacy~Software and application security}

\keywords{Deep Neural Networks; Adversarial Attacks; ADMM (Alternating Direction Method of Multipliers)}

\maketitle

\section{Introduction}

Deep learning has been demonstrating exceptional performance on several categories of machine learning problems and has been applied in many settings \cite{deng2009imagenet, krizhevsky2012imagenet, taigman2014deepface,he2016deep,hinton2012deep,silver2016mastering,makantasis2015deep}. However, people recently find that deep neural networks (DNNs) could be vulnerable to adversarial attacks \cite{carlini2016hidden,nguyen2015deep,kurakin2016adversarial}, which arouses concerns of applying deep learning in security-critical tasks. Adversarial attacks are implemented through generating adversarial examples, which are crafted by adding delicate distortions onto legal inputs. Fig. \ref{fig:adversarial_examples} shows adversarial examples for targeted adversarial attacks that can fool DNNs. 


The security properties of deep learning have been investigated from two aspects: (i) enhancing the robustness of DNNs under adversarial attacks and (ii) crafting adversarial examples to test the vulnerability of DNNs. For the former aspect, research works have been conducted by either filtering out added distortions \cite{guo2017countering,bhagoji2017dimensionality,dziugaite2016study,xie2017mitigating} or revising DNN models \cite{papernot2016distillation,dhillon2018stochastic,feinman2017detecting} to defend against adversarial attacks. For the later aspect, adversarial examples have been generated heuristically \cite{goodfellow2014explaining,su2017one}, iteratively \cite{papernot2016limitations, kurakin2016adversarial,hong2017linear,wang2014bregman}, or by solving optimization problems \cite{szegedy2013intriguing,carlini2017towards,chen2017ead,athalye2018obfuscated}. These two aspects mutually benefit each other towards hardening DNNs under adversarial attacks. And our work deals with the problem from the later aspect.
 

For targeted adversarial attacks, the crafted adversarial examples should be able to mislead the DNN to classify them as any target labels, as done in Fig. \ref{fig:adversarial_examples}. Also, in a successful adversarial attack, the targeted mis-classification should be achieved with the minimal distortion added to the original legal input. Here comes the question of how to measure the added distortions. Currently, in the literature, $L_0$, $L_1$, $L_2$, and $L_\infty$ norms are used to measure the added distortions, and they are respectively named $L_0$, $L_1$, $L_2$, and $L_\infty$ adversarial attacks. Even though no measure can be perfect for human perceptual similarity, these measures or attack types may be employed for different application specifications. This work bridges the literature gap by unifying all the types of attacks with a single intact framework. 



\begin{figure}[htbp]\centering                                                      
\subfigure[MNIST]{                    
\begin{minipage}[t]{0.22\textwidth}
\centering                                                         
\includegraphics[width=1\textwidth]{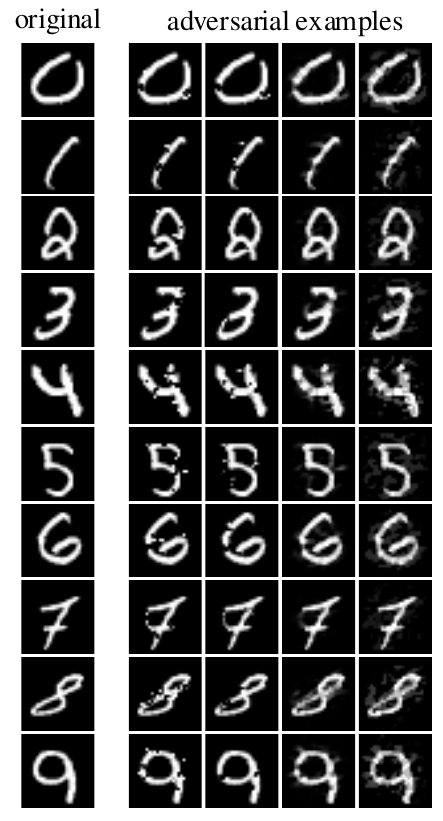}              
\end{minipage}
}
\subfigure[CIFAR-10]{                 
\begin{minipage}[t]{0.22\textwidth}
\centering                                                          
\includegraphics[width=0.93\textwidth]{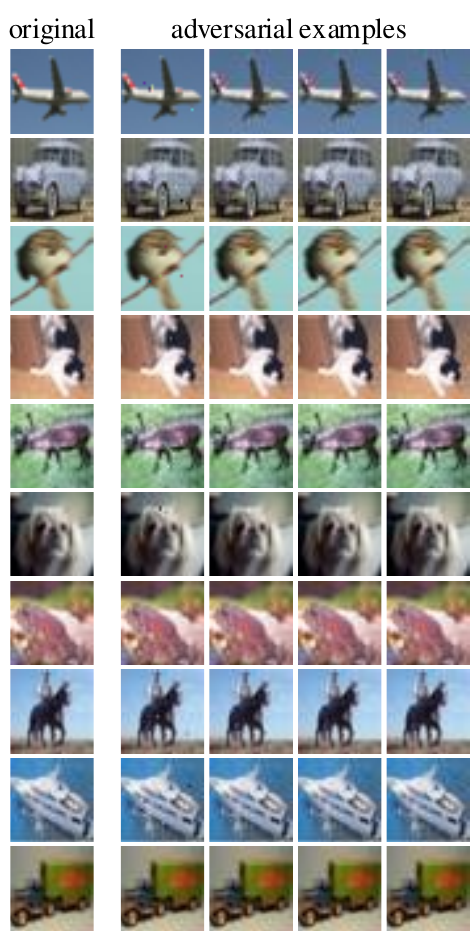}              
\end{minipage}
}
\caption{Adversarial examples generated by our ADMM $L_0$, $L_1$, $L_2$, and $L_\infty$ attacks for MNIST (left) and CIFAR-10 (right) datasets. The leftmost column contains the original legal inputs. The next four columns are the corresponding adversarial examples crafted using our ADMM $L_0$, $L_1$, $L_2$, and $L_\infty$ attacks, respectively. If the original inputs are correctly classified as label $l$, then the adversarial examples mislead the DNN to classify them as target label $l+2$.}                      
\label{fig:adversarial_examples}                                                        
\end{figure}

In order to benchmark DNN defense techniques and to push for a limit of the DNN security level, we should develop the strongest adversarial attacks. For this purpose, we adopt the white-box attack assumption in that the attackers have complete information about the DNN architectures and all the parameters. This is also a realistic assumption, because even if we only have black-box access to the DNN model, we can train a substitute model and transfer the attacks generated using the substitute model. And for the same purpose, we adopt the optimization-based approach to generate adversarial examples. The objectives of the optimization problem should be (i) misleading the DNN to classify the adversarial example as a target label and (ii) minimizing the $L_p$ norm of the added distortion.

By leveraging ADMM (Alternating Direction Method of Multipliers) \cite{boyd2011distributed}, 
an operator splitting optimization approach,
we provide a universal framework for $L_0$, $L_1$, $L_2$, and $L_\infty$ adversarial attacks. ADMM decomposes an original optimization problem into two correlated subproblems, 
each of which can be solved more efficiently or analytically,
and then coordinates solutions to the subproblems to construct a solution to the original problem. This decomposition-alternating
procedure of ADMM blends the benefits of dual decomposition and augmented Lagrangian for solving problems with non-convex and combinatorial constraints. Therefore, ADMM introduces no additional sub-optimality besides the original gradient-based backpropagation method commonly used in DNNs and provides a faster linear
convergence rate than state-of-the-art iterative attacks \cite{papernot2016limitations, kurakin2016adversarial,hong2017linear,wang2014bregman}. We also compare with the optimization-based approaches, i.e., Carlini \& Wagner (C\&W) attack \cite{carlini2017towards} and Elastic-net (EAD) attack \cite{chen2017ead}, which are the currently strongest attacks in the literature.   


The major contributions of this work and its differences from C\&W and EAD attacks are summarized as follows: 
\begin{itemize}
\item With our ADMM-based universal framework, all the $L_0$, $L_1$, $L_2$, and $L_\infty$ adversarial attacks can be implemented with little modifications, while C\&W only performs $L_0$, $L_2$, and $L_\infty$ attacks and EAD only performs $L_1$ and $L_2$ attacks.
\item C\&W $L_0$ attack needs to run their $L_2$ attack iteratively to find the pixels with the least effect and fix them, thereby identifying a minimal subset of pixels for modification to generate an adversarial example.
\item C\&W $L_\infty$ attack through naively optimization with gradient descent may produce very poor initial results. They solve the issue by introducing a limit on the $L_\infty$ norm and reducing the limit iteratively. 
\item EAD attack  minimizes a weighted sum of $L_1$ and $L_2$ norms. However, a universal attack generation model is missing.
\item Our extensive experiments show that we are so far the best attacks. Besides the 100\% attack success rate, our ADMM-based attacks outperform C\&W and EAD in each type of attacks in terms of minimal distortion.
\end{itemize}

Besides comparing with C\&W, EAD and other attacks, we also test our attacks against defenses such as defensive distillation \cite{papernot2016distillation} and adversarial training \cite{tram2018ensemble}, demonstrating the success of our attacks. In addition, we validate the transferability of our attacks onto different DNN models. 
The codes of our attacks to reproduce the results are available online\footnotemark.
\footnotetext{Codes will be available upon publication of this work.}

\section{Related Work}

We introduce the most representative attacks and defenses in this section.

\subsection{Adversarial Attacks}
\textbf{L-BFGS Attack \cite{szegedy2013intriguing}} is the first optimization-based attack and is an $L_2$ attack that uses $L_2$ norm to measure the distortion in the optimization objective function.

\noindent\textbf{JSMA Attack \cite{papernot2016limitations}} is an $L_0$ attack and uses a greedy algorithm that picks the most influential pixels by calculating Jacobian-based Saliency Map and modifies the pixels iteratively. The computational complexity is prohibitive even for applying to ImageNet dataset.


\noindent\textbf{FGSM \cite{goodfellow2014explaining} and IFGSM \cite{kurakin2016adversarial} Attacks} are $L_\infty$ attacks and utilize the gradient of the loss function to determine the direction to modify the pixels. They are designed to be fast, rather than optimal. They can be used for adversarial training by directly changing the loss function instead of explicitly injecting adversarial examples into the training data. The fast gradient method (FGM) and the iterative fast gradient method (IFGM) are improvements of FGSM and IFGSM, respectively, that can be fitted as $L_1$, $L_2$, and $L_\infty$ attacks.


\noindent\textbf{C\&W Attacks \cite{carlini2017towards}} are a series of $L_0$, $L_2$, and $L_\infty$ attacks that achieve 100\% attack success rate with much lower distortions comparing with the above-mentioned attacks. In particular, the C\&W $L_2$ attack is superior to L-BFGS attack (which is also an $L_2$ attack) because it uses a better objective function.

\noindent\textbf{EAD Attack \cite{chen2017ead}} formulates the process of crafting adversarial examples as an elastic-net regularized optimization problem. Elastic-net regularization is a linear mixture of $L_1$ and $L_2$ norms used in the penalty function. EAD attack is able to craft $L_1$-oriented adversarial examples and includes the C\&W $L_2$ attack as a special case.
 
\subsection{Representative Defenses }
\textbf{Defensive Distillation \cite{papernot2016distillation}} introduces \emph{temperature} into the softmax layer and uses a higher temperature for training and a lower temperature for testing. The training phase first trains a teacher model that can produce soft labels for the training dataset and then trains a distilled model using the training dataset with soft labels. The distilled model with reduced temperature will be preserved for testing.

\noindent\textbf{Adversarial Training \cite{tram2018ensemble}} injects adversarial examples with correct labels into the training dataset and then retrains the neural network, thus increasing robustness of DNNs under adversarial attacks.

\section{An ADMM-Based Universal Framwork for Adversarial Attacks} \label{ADMMframework}


ADMM was first introduced in the mid-1970s with roots in the 1950s, and the algorithm and theory have been established by the mid-1990s. It was proposed and made popular recently by S. Boyd et al. for statistics and machine learning problems with a very large number of features or training examples \cite{boyd2011distributed}. 
ADMM method takes the form of a decomposition-alternating 
procedure, in which the solutions to small local subproblems are coordinated to find a solution to a large global problem. It can be viewed as an attempt to blend the benefits of dual decomposition and augmented Lagrangian methods for constrained optimization.

ADMM was developed in part to bring robustness to the dual ascent method, and in particular, to yield convergence without assumptions like strict convexity or finiteness of the objective.
ADMM is also capable of dealing with combinatorial constraints due to its decomposition property. It can be used in many practical applications, where the convexity of the objective can not be guaranteed or it has some combinatorial constraints. Besides, 
it converges fast in many cases since the two arguments are updated in an alternating or sequential fashion, which accounts for the term \textit{alternating direction}.

\subsection{Notations and Definitions}

In this paper, we mainly evaluate the adversarial attacks with image classification tasks. A two dimensional vector ${\bm x} \in {\mathop{\rm R}\nolimits} ^{hw}$ represents a gray-scale image with height $h$ and width $w$. For a colored RGB image with three channels, a three dimensional tensor ${\bm x} \in {\mathop{\rm R}\nolimits} ^{3hw}$ is utilized to denote it.
Each element $x_i$ represents the value of the $i$-th pixel and is scaled to the range of $[0,1]$.  
A neural network has the model $F(\bm{x})=\bm{y}$, where $F$ generates an output $\bm{y}$ given an input $\bm{x}$. Model $F$ is fixed since we perform attacks on given neural network models.

The output layer performs softmax operation and the neural network is an $m$-class classifier. Let the logits $Z(\bm{x})$ denote the input to the softmax layer, which represents the output of all layers except for the softmax layer. We have $F({\bm{x}}) = {\rm{softmax}}(Z({\bm{x}})) = {\bm{y}}$. The element ${y_i}$ of the output vector ${\bm{y}}$ represents the probability that input $\bm{x}$ belongs to the $i$-th class. The output vector ${\bm{y}}$ is treated as a probability distribution, and its elements satisfy $0 \le {y_i} \le 1$ and $y_1+y_2+\dots+y_m=1$. The neural network classifies input $\bm{x}$ according to the maximum probability, i.e., $C(\bm{x}) = \arg \mathop {\max }\limits_i {y_i}$.


The adversarial attack can be either targeted or untargeted. Given an original legal input $\bm{x_0}$ with its correct label $t^*$, the untargeted adversarial attack is to find an input $\bm{x}$ satisfying $C(\bm{x}) \ne t^*$ while $\bm{x}$ and $\bm{x_0}$ are close according to some measure of the distortion. The untargeted adversarial attack does not specify any target label  to mislead the classifier.
In the targeted adversarial attack, with a given target label $t \ne {t^*}$, an adversarial example is an input $\bm{x}$ such that $C(\bm{x}) = t$ while $\bm{x}$ and $\bm{x_0}$ are close according to some measure of the distortion. 
In this work, we consider targeted adversarial attacks since they are believed stronger than untargeted attacks.

\subsection{General ADMM Framework for Adversarial Attacks}

The \textit{initial} problem of constructing adversarial examples is defined as: \textbf{Given} an original legal input image $\bm{x_0}$ and a target label $t$, \textbf{find} an adversarial example $\bm{x}$, \textbf{such that} $\mathcal{D}(\bm{x}-\bm{x_0})$ is minimized, $C(\bm{x}) = t$, and $\bm{x} \in{[0,1]^n}$. $\bm{x}-\bm{x_0}$ is the distortion added onto the input $\bm{x_0}$.  $C(\cdot)$ is the classification function of the neural network and the adversarial example $\bm{x} \in {[0,1]^n}$ is classified as the target label $t$. 


$\mathcal{D}(\bm{x}-\bm{x_0})$ is a measure of the distortion $\bm{x}-\bm{x_0}$. We need to measure the distortion between the original legal input $\bm{x_0}$ and the adversarial example $\bm{x}$. ${L_p}$ norms are the most commonly used measures in the literature. The ${L_p}$ norm of the distortion between $\bm{x}$ and $\bm{x_0}$ is defined as:
\begin{equation}
{\left\| {\bm{x}-\bm{x_0}} \right\|_p} = {\left( {\sum\limits_{i = 1}^n {{{\left| \bm{x}_{i}-\bm{x}_{0i} \right|}^p}} } \right)^{\frac{1}{p}}}
\end{equation}
We see the use of $L_0$, $L_1$, $L_2$, and $L_\infty$ norms in different attacks. 
\begin{itemize}
\item[-] \emph{${L_0}$ norm}: measures the number of mismatched elements between $\bm{x}$ and $\bm{x_0}$. 
\item[-] \emph{${L_1}$ norm}: measures the sum of the absolute values of the differences between $\bm{x}$ and $\bm{x_0}$.  
\item[-] \emph{${L_2}$ norm}: measures the standard Euclidean distance between $\bm{x}$ and $\bm{x_0}$. 
\item[-] \emph{${L_\infty }$ norm}: measures the maximum difference between $\bm{x}_i$ and $\bm{x}_{0i}$ for all $i$'s.
\end{itemize}

In this work, with a general ADMM-based framework, we implement $L_0$, $L_1$,  $L_2$, and $L_\infty$ attacks, respectively. When generating adversarial examples in the four attacks, $\mathcal{D}(\bm{x}-\bm{x_0})$ in the objective function becomes $L_0$, $L_1$,  $L_2$, and $L_\infty$ norms, respectively. For the simplicity of expression, in the general ADMM-based framework, the form of $\mathcal{D}(\bm{x}-\bm{x_0})$ is used to denote the measure of $\bm{x}-\bm{x_0}$. When introducing the detailed four attacks based on the ADMM framework, we utilize the form of $L_p$ norm to represent the distortion measure.  

ADMM provides a systematic way to deal with non-convex and combinatorial constraints by breaking the initial problem into two subproblems. To do this, the initial problem is first transformed into the following problem, introducing an auxiliary variable $\bm{z}$: 
\begin{equation}\label{original_problem}
\begin{array}{l}
\mathop {\min }\limits_{\bm{x} ,\bm{z}} \;\, {\kern 1pt} \mathcal{D}(\bm{x} -\bm{x_0} ) + g(\bm{z})\\
s.t.\quad \bm{x}  = \bm{z}\\
\quad \quad \ \bm{z} \in {[0,1]^n}
\end{array}
\end{equation}
where $g(\bm{x})$ has the form:
\begin{equation}
g(\bm{x}) = \left\{ {\begin{array}{*{20}{c}}
	0&{\text{if}\;\mathop {\max }\limits_{i \ne t} (Z(\bm{x})_i) - Z{{(\bm{x})}_t} \le 0}\\
	{ + \infty }&{\text{otherwise}}
	\end{array}} \right.
\end{equation}
Here $Z(\bm{x})$ is the logits before the softmax layer. $Z(\bm{x})_i$ means the $i$-th element of $Z(\bm{x})$. The function $g(\bm{x})$ ensures that the input is classified with target label $t$.  
The augmented Lagrangian function of problem (\ref{original_problem}) is as follows:
\begin{equation}
{L_\rho }(\bm{x} ,\bm{z},\bm{u}) = \mathcal{D}(\bm{x} - \bm{x_0} ) + g(\bm{z}) + {\bm{u}^T}(\bm{x}  - \bm{z}) + \frac{\rho }{2}\left\| {\bm{x}  - \bm{z}} \right\|_2^2 
\end{equation}
where $\bm{u}$ is the dual variable or Lagrange multiplier and $\rho > 0 $ is called the penalty parameter. 
Using the scaled form of ADMM by defining $\bm{u} = \rho \bm{s}$, we have:
\begin{equation}\label{Lro}
{L_\rho }(\bm{x} ,\bm{z},\bm{s}) = \mathcal{D}(\bm{x}- \bm{x_0} ) + g(\bm{z}) + \frac{\rho }{2}\left\| {\bm{x}  - \bm{z} + \bm{s}} \right\|_2^2 - \frac{\rho }{2}\left\| \bm{s} \right\|_2^2
\end{equation}

ADMM solves problem (\ref{original_problem}) through iterations. In the $k$-th iteration, the following steps are performed:
\begin{equation}\label{It1}
{{\bm{x} ^{k + 1}} = {\arg \mathop {\min }\limits_{\bm{x}}}  \quad {L_\rho }(\bm{x} ,{\bm{\bm{z}}^k},{\bm{s}^k})}
\end{equation}
\begin{equation}\label{It2}
{\bm{z}^{k + 1}} = \arg \mathop {\min }\limits_{\bm z} \quad {L_\rho }({\bm{x} ^{k + 1}},\bm{z},{\bm{s}^k})
\end{equation}
\begin{equation}\label{update1}
{\bm{s}^{k + 1}} = {\bm{s}^k} + {\bm{x} ^{k + 1}} - {\bm{z}^{k + 1}}
\end{equation}
In Eqn. (\ref{It1}), we find $\bm{x} ^{k + 1}$ which minimizes $L_\rho$ with fixed $\bm{z}^k$ and $\bm{s}^k$. Similarly, in Eqn. (\ref{It2}), $\bm{x}^{k+1}$ and $\bm{s}^k$ are fixed and we find $\bm{z}^{k+1}$ minimizing $L_\rho$. $\bm{s}^{k+1}$ is then updated accordingly. Note that the two variables $\bm{x}$ and $\bm{z}$ are updated in an alternating or sequential fashion, from which the term \textit{alternating direction} comes. It converges when:
\begin{equation}
\left\| {{\bm{x} ^{k + 1}} - {\bm{z}^{k + 1}}} \right\|_2^2 \le \varepsilon ,\quad \left\| {{\bm{z}^{k + 1}} - {\bm{z}^k}} \right\|_2^2 \le \varepsilon 
\end{equation}

Equivalently, in each iteration, we solve two optimization subproblems corresponding to Eqns. (\ref{It1}) and (\ref{It2}), respectively:
\begin{equation}\label{Sub1}
{\mathop {\min }\limits_{\bm x}}  \quad  \mathcal{D}(\bm{x}-\bm{x_0} ) + \frac{\rho }{2}\left\| {\bm{x}  - \bm{z} + \bm{s}} \right\|_2^2
\end{equation}
and
\begin{equation}\label{Sub2}
\mathop {\min }\limits_{\bm z} \quad g( \bm{z}) + \frac{\rho }{2}\left\| {\bm{x}  - \bm{z} + \bm{s}} \right\|_2^2
\end{equation}
The non-differentiable $g(\bm{x})$ makes it difficult to solve the second subproblem (\ref{Sub2}). Therefore, a new differentiable $g(\bm{x})$ inspired by \cite{carlini2017towards} is utilized as follows:   
\begin{equation}\label{newg}
g(\bm{x}) = c \cdot \max \left( {\left( {\mathop {\max }\limits_{i \ne t} \left( {Z(\bm{x})_i} \right) - Z{{(\bm{x})}_t}} \right),-\kappa } \right) 
\end{equation}
Then, stochastic gradient decent methods can be used to solve this subproblem. The Adam optimizer \cite{KingmaB2015adam} is applied due to its fast and robust convergence behavior. In the new $g(\bm{x})$ of Eqn. (\ref{newg}), $\kappa$ is a confidence parameter denoting the strength of adversarial example transferability. The larger $\kappa$, the stronger transferability of the adversarial example. It can be kept as 0 if we do not evaluate the transferability. 


\subsection{Box Constraint}
The constraint on $\bm{z}$ i.e., $ \bm z \in [0,1]^n$ is known as a ``box constraint'' in the optimization literature. We use a new variable $\bm{w}$ and instead of optimizing over $\bm z$ defined above, we optimize over $\bm w$, based on:
\begin{equation}\label{Box}
\bm{z}= \frac12 \big(\tanh(\bm{w}) + 1\big) 
\end{equation}
Here the $\tanh(\cdot)$ is performed elementwise. Since $-1 \le \tanh(w_i) \le 1$, the method will automatically satisfy the box constraint and allows us to use optimization algorithms that do not natively support box constraints.


\subsection{Selection of Target Label}

For targeted attacks, there are different ways to choose the target labels:
\begin{itemize}
\item[-] \emph{Average Case}: select at random the target label uniformly among all the labels that are not the correct label.
\item[-] \emph{Best Case}: perform attacks using all incorrect labels, and report the target label that is the least difficult to attack.
\item[-] \emph{Worst Case}: perform attacks using all incorrect labels, and report the target label that is the most difficult to attack.
\end{itemize}
We evaluate the performs of the proposed ADMM attacks in the three cases mentioned above.

\subsection{Discussion on Constants}


There are two constants $c$ and $\rho$ in the two subproblems (\ref{Sub1}) and (\ref{Sub2}). Different policies are adopted for choosing appropriate $c$ and $\rho$ in $L_0$, $L_1$, $L_2$ and $L_\infty$ attacks. 
In $L_2$ attack, since $\rho$ acts in both problems (\ref{Sub1}) and (\ref{Sub2}), we fix $\rho$ and change $c$ to improve the solutions. We find that the best choice of $c>0$ is the smallest one that can help achieve $g(\bm{x}) = 0$ in the subproblem (\ref{Sub2}).
Thus, a modified binary search is used to find a satisfying $c$. 
For the ADMM $L_0$ attack, as $\rho$ has stronger and more direct influence on the solutions, $c$ is fixed and adaptive search of $\rho$ is utilized. More details are provided in Section \ref{ro_in_L0}. 
For the ADMM $L_1$ and $L_\infty$ attacks, as we find fixed $c$ and $\rho$ can achieve good performance,  $c$ and $\rho$ are kept unchanged and adaptive search method is not used.

\section{Instantiations of $L_0$, $L_1$, $L_2$ and $L_\infty$ Attacks based on ADMM Framework}

The ADMM framework for adversarial attacks now need to solve two subproblems (\ref{Sub1}) and (\ref{Sub2}). The difference between $L_0$, $L_1$, $L_2$ and $L_\infty$ attacks lies in the subproblem (\ref{Sub1}), while the processes to find the solutions of the subproblem (\ref{Sub2}) based on stochastic gradient descent method are the very similar for the four attacks.

\subsection{$L_2$ Attack}
For $L_2$ attack, the subproblem (\ref{Sub1}) has the form:
\begin{equation}\label{l2a}
{\mathop {\min }\limits_{\bm{x}}}  \quad \left\| \bm{x} -\bm{x_0}  \right\|_2^2 + \frac{\rho }{2}\left\| {\bm{x}  - \bm{z} + \bm{s}} \right\|_2^2
\end{equation}
the solution to which can be directly derived in an analytical format:
\begin{equation}\label{l2s}
{\bm{x}} = \frac{\rho }{{2 + \rho }}({\bm{z}} - {\bm{s}})+\frac{2 }{{2 + \rho }} \bm{x_0}
\end{equation}


Then the complete solution to the $L_2$ attack problem using the ADMM framework is as follows: for the $k$-th iteration,
\begin{equation}\label{sub1ininteration}
{\bm{x} ^{k + 1}} = \frac{\rho }{{2 + \rho }}\bigg({\Big(\frac12 \big(\tanh(\bm{w}^k) + 1\big) \Big)} - {\bm{s}^k}\bigg)+\frac{2 }{{2 + \rho }} \bm{x_0}
\end{equation}
\begin{equation}\label{adm1}
\begin{array}{l}
{{\bm{w}}^{k + 1}} = \arg \mathop {\min }\limits_{\bm{w}} \quad \left( {g\left( {\frac{1}{2}(\tanh ({\bm{w}}) + 1)} \right)} \right.\\
\quad \quad \;\;\;\ \ \left. { + \frac{\rho }{2}\left\| {{{\bm{x }}^{k + 1}} - \left( {\frac{1}{2}(\tanh ({\bm{w}}) + 1) } \right) + {{\bm{s}}^k}} \right\|_2^2} \right)
\end{array}
\end{equation}
\begin{equation}\label{supdate}
{\bm{s}^{k + 1}} = {\bm{s}^k} + {\bm{x} ^{k + 1}} - {\Big(\frac12 \big(\tanh(\bm{w}^{k + 1}) + 1\big)  \Big)}
\end{equation}
Eqn. (\ref{sub1ininteration}) corresponds to the analytical solution to the subproblem (\ref{Sub1}) i.e., problem (\ref{l2a}) with Eqn. (\ref{Box}) replacing $\bm z$ in Eqn. (\ref{l2s}).
Eqn. (\ref{adm1}) corresponds to the subproblem (\ref{Sub2}) with Eqn. (\ref{Box}) replacing $\bm z$ and $g$ taking the form of Eqn. (\ref{newg}).  
The solution to Eqn. (\ref{adm1}) is derived through the Adam optimizer with stochastic gradient descent.

\subsection{$L_0$ Attack} \label{ro_in_L0}

For $L_0$ attack, the subproblem (\ref{Sub1}) has the form:
\begin{equation}\label{l0a}
\mathop {\min }\limits_{\bm{x}}  \quad \left\| \bm{x} -\bm{x_0} \right\|_{0} + \frac{\rho }{2}\left\| {\bm{x}  - \bm{z} + \bm{s}} \right\|_2^2
\end{equation}
Its equivalent optimization problem is as follows:
\begin{equation}\label{equl0a}
\mathop {\min }\limits_{\bm{\delta}}  \quad \left\| \bm{\delta} \right\|_{0} + \frac{\rho }{2}\left\| {\bm{\delta}  - \bm{z} + \bm{s} +\bm{x_0}} \right\|_2^2
\end{equation}
The solution to problem (\ref{l0a}) can be obtained through $\bm{x^*} = \bm{x_0} + \bm{\delta^*}$ where $\bm{\delta^*} $ is the solution to problem (\ref{equl0a}).
The solution to problem (\ref{equl0a}) can be derived in this way:
let $\bm{\delta}$ be equal to $\bm{z}-\bm{s}-\bm{x_0}$ first, then for each element in $\bm{\delta}$, if its square is smaller than $\frac{2}{\rho}$, make it zero. 
A proof for the solution is given in the following.


\begin{lemma}\label{lemma}
	Suppose that two matrices $\bm{A}$, $\bm{B}$ are of the same size, and that there are at least $k$ zero elements in $\bm{A}$. Then the optimal value of the following problem is the sum of the square of the $k$ smallest elements in $\bm{B}$.
	\begin{equation}\label{le}
	\mathop {\min }\limits_{\bm{A}}  \quad \left\| {\bm{A} -\bm{B}} \right\|_2^2
	\end{equation}
\end{lemma}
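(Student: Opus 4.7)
The plan is to exploit the elementwise decomposition of the squared Frobenius norm and reduce the matrix problem to a combinatorial selection problem over indices.

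First I would rewrite the objective: flattening both matrices into vectors (or summing over all index pairs), we have
\[
\|\bm A - \bm B\|_2^2 \;=\; \sum_{i} (A_i - B_i)^2 .
\]
Since the only constraint on $\bm A$ is that at least $k$ of its entries vanish, I can split the index set into the ``zero'' positions $S$ (with $|S| \ge k$) and the ``free'' positions $S^c$. On $S^c$ the entries $A_i$ are unconstrained, so the minimizing choice is $A_i = B_i$, contributing $0$. On $S$ the contribution is exactly $\sum_{i \in S} B_i^2$. Hence for any admissible $\bm A$ with zero set $S$,
\[
\|\bm A - \bm B\|_2^2 \;\ge\; \sum_{i \in S} B_i^2 ,
\]
with equality achievable by setting $A_i = B_i$ off $S$.

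Next I would optimize over the choice of $S$ with $|S| \ge k$. Since every term $B_i^2$ is nonnegative, enlarging $S$ beyond size $k$ only increases the sum, so an optimal $S$ has size exactly $k$. The minimization then reduces to choosing $k$ indices $S$ that minimize $\sum_{i \in S} B_i^2$; this is achieved by taking the $k$ indices at which $|B_i|$ (equivalently $B_i^2$) is smallest. The resulting optimal value is the sum of the squares of the $k$ smallest (in absolute value) entries of $\bm B$, which is exactly the claim.

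I do not expect any serious obstacle: the argument is essentially a greedy selection argument combined with an elementwise decomposition, and both parts are straightforward. The only mild subtlety is being careful to interpret ``$k$ smallest elements of $\bm B$'' as the $k$ entries of smallest magnitude (which is the natural reading in the ADMM $L_0$ context, where the penalty $\|\bm\delta\|_0$ rewards zeroing small-magnitude coordinates), and to note that if ties occur in $|B_i|$ the optimal $S$ is not unique but the optimal value is. A brief remark to this effect should suffice to make the lemma rigorous.
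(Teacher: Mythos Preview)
Your argument is correct and is precisely the straightforward elementwise decomposition the paper has in mind; indeed, the paper omits the proof entirely, stating only that it is ``straightforward.'' Your remark about interpreting ``$k$ smallest elements'' as smallest in magnitude is well taken and matches the paper's later definition $h(\bm x,k)$ as the sum of the $k$ smallest $\bm x_i^2$.
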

The proof for the lemma is straightforward and we omit it for the sake of brevity. We use $h(\bm{x},k)$ to denote the sum of 
the $k$ smallest $\bm x_i^2$ ($\bm x_i$ is an element in $\bm x$). 
\begin{theorem}\label{theorem}
	Set $\bm{\delta}=\bm{z}-\bm{s}-\bm{x_0}$ and then make those elements in $\bm{\delta}$ zeros if their square are smaller than $\frac{2}{\rho}$. Such $ \bm{\delta}$ would yield the minimum objective value of problem (\ref{equl0a}).
\end{theorem}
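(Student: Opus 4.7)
The plan is to exploit the fact that both terms in the objective of problem~(\ref{equl0a}) are separable across coordinates: writing $\bm{B} = \bm{z} - \bm{s} - \bm{x_0}$, the problem becomes
\begin{equation*}
\min_{\bm{\delta}}\;\sum_{i=1}^n \Bigl(\mathbf{1}[\delta_i \neq 0] + \frac{\rho}{2}(\delta_i - B_i)^2\Bigr),
\end{equation*}
so it decouples into $n$ independent one-dimensional problems. For each $i$, there are only two optimal candidates: either $\delta_i = 0$, incurring cost $\frac{\rho}{2}B_i^2$, or $\delta_i \neq 0$, in which case the continuous minimizer of the quadratic is $\delta_i = B_i$ with total cost $1$. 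Comparing the two costs gives exactly the threshold rule $B_i^2 \le \tfrac{2}{\rho}$ for zeroing $\delta_i$, and $\delta_i = B_i$ otherwise, which is the prescription of the theorem.

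Alternatively, and closer to the flavor of the preceding Lemma~\ref{lemma}, I would first fix the support size $\|\bm\delta\|_0 = n-k$ and use the lemma to conclude that, among all $\bm\delta$ with exactly $k$ zero coordinates, the minimum of $\frac{\rho}{2}\|\bm\delta-\bm{B}\|_2^2$ equals $\frac{\rho}{2}\,h(\bm{B},k)$, attained by placing the zeros at the $k$ indices with smallest $B_i^2$ and choosing $\delta_i = B_i$ elsewhere. The objective thus reduces to minimizing $(n-k) + \frac{\rho}{2}h(\bm{B},k)$ over $k \in \{0,1,\ldots,n\}$. A finite-difference argument then shows that increasing $k$ by $1$ changes the objective by $-1 + \frac{\rho}{2}B_{(k)}^2$, where $B_{(k)}^2$ is the $k$-th smallest squared entry of $\bm{B}$; this is nonpositive exactly when $B_{(k)}^2 \le \tfrac{2}{\rho}$. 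Hence the optimal $k$ is the number of entries satisfying $B_i^2 \le \tfrac{2}{\rho}$, and those entries are zeroed out while the rest are kept at $B_i$.

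Either route recovers the claim, and translating back via $\bm{x}^\star = \bm{x_0} + \bm\delta^\star$ solves the original subproblem~(\ref{l0a}). There is no real obstacle: the only subtlety is verifying that, once the support is fixed, the unconstrained quadratic is minimized coordinatewise by $\delta_i = B_i$ on the support, which the lemma handles implicitly. I would present the separability argument as the main proof, since it is shorter and explains why hard thresholding with threshold $\sqrt{2/\rho}$ is the unique minimizer up to ties at the boundary $B_i^2 = \tfrac{2}{\rho}$.
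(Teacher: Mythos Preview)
Your proposal is correct. Your second route is essentially the paper's own proof: the paper also invokes Lemma~\ref{lemma} to reduce the problem to comparing objective values indexed by the number $k$ of zero entries, and then shows that the choice $k=k_1$ (the number of indices with $B_i^2<\tfrac{2}{\rho}$) beats any other $k_2$ by a case analysis $k_2>k_1$ versus $k_2<k_1$; your finite-difference formulation $-1+\tfrac{\rho}{2}B_{(k)}^2$ is just a repackaging of that same comparison.

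Your first route via coordinate separability is a genuinely different and more elementary argument. It bypasses Lemma~\ref{lemma} altogether by observing that the objective decomposes into independent scalar problems, each with only two candidates ($\delta_i=0$ or $\delta_i=B_i$), so the threshold rule drops out immediately from a single inequality per coordinate. This is shorter and identifies the solution as the standard hard-thresholding (proximal) operator of $\|\cdot\|_0$, which also makes transparent why ties at $B_i^2=\tfrac{2}{\rho}$ are the only source of non-uniqueness. The paper's lemma-based argument, by contrast, keeps the proof in vector form and reuses the stated lemma, at the cost of a longer case analysis; it also implicitly relies on the same coordinatewise fact (that on the nonzero support one should set $\delta_i=B_i$) that your separability argument makes explicit. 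Either presentation is fine; the separability one is cleaner.
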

\begin{proof}
	Suppose that $\bm{\delta_1}$ is constructed according to the above rule in Theorem 1, and $\bm{\delta_1}$ has $k_1$ elements equal to 0. We need to prove that $\bm{\delta_1}$ is the optimal solution with the minimum objective value. Suppose we have another arbitrary solution  $\bm{\delta_2}$ with $k_2$ elements equal to 0. Both $\bm{\delta_1}$ and  $\bm{\delta_2}$ have $n$ elements. The objective value of solution $\bm{\delta_1}$ is:
	\begin{equation}
	\left\| {\bm{\delta}_1} \right\|_0 + \frac{\rho }{2}h(\bm{z}-\bm{s}-\bm{x_0}, k_1) = n-k_1 + \frac{\rho }{2}h(\bm{z}-\bm{s}-\bm{x_0}, k_1) 
	\end{equation} 
	The objective value of solution $\bm{\delta}_2$ is:
	\begin{equation}\label{lemm}
\begin{array}{l}
{\left\| {{{\bm{\delta }}_2}} \right\|_0} + \frac{\rho }{2}\left\| {{{\bm{\delta }}_2} - {\bm{z}} + {\bm{s}}  + \bm{x_0}} \right\|_2^2 \ge {\left\| {{{\bm{\delta }}_2}} \right\|_0} + \frac{\rho }{2}h({\bm{z}} - {\bm{s}} -\bm{x_0},{k_2})\\
 \quad \quad \quad \quad \quad \quad \quad \quad \;\; \ \  = n - {k_2} + \frac{\rho }{2}h({\bm{z}} - {\bm{s}} -\bm{x_0},{k_2})
\end{array}
	\end{equation} 
The inequality in Eqn. (\ref{lemm}) holds due to Lemma \ref{lemma}. 

	If $k_2 > k_1$, then according to the definition of $\bm{\delta}_1$, we have
	\begin{equation}
	h(\bm{z}-\bm{s}-\bm{x_0}, k_2)-h(\bm{z}-\bm{s}-\bm{x_0}, k_1)>\frac{2}{\rho} (k_2-k_1)
	\end{equation} 
	So that 
	\begin{equation}
	\big(n-k_2 + \frac{\rho }{2}h(\bm{z}-\bm{s}-\bm{x_0}, k_2)\big) - \big(n-k_1 + \frac{\rho }{2}h(\bm{z}-\bm{s}-\bm{x_0}, k_1) \big)>0
	\end{equation} 
	
	If $k_1 > k_2$, then according to the definition of $\bm{\delta}_1$, we have
	\begin{equation}
	h(\bm{z}-\bm{s}-\bm{x_0}, k_1)-h(\bm{z}-\bm{s}-\bm{x_0}, k_2)<\frac{2}{\rho} (k_1-k_2)
	\end{equation} 
	So that 
	\begin{equation}
	\big(n-k_2 + \frac{\rho }{2}h(\bm{z}-\bm{s}-\bm{x_0}, k_2)\big) - \big(n-k_1 + \frac{\rho }{2}h(\bm{z}-\bm{s}-\bm{x_0}, k_1) \big) > 0
	\end{equation} 
	Thus, we can see that our solution $\bm{\delta}_1$ can achieve the minimum objective value and it is the optimal solution.
\end{proof}


When solving the subproblem 
(\ref{l0a}) according to Theorem \ref{theorem}, we enforce a hidden constraint on the distortion $\bm{\delta}=\bm{x}-\bm{x_0}$, that the square of each non-zero element in $\bm{\delta}$ must be larger than $\frac{2}{\rho}$. Therefore, a smaller $\rho$ would push ADMM method to find $\bm{\delta}$ with larger non-zero elements, 
thus reducing the number of non-zero elements and decreasing $L_0$ norm. 
Empirically, we find the constant $\rho$ represents a trade-off between attack success rate and $L_0$ norm of the distortion, i.e., a larger $\rho$ can help find solutions with higher attack success rate at the cost of larger $L_0$ norm of the distortion. 


Then the complete solution to the $L_0$ attack problem using the ADMM framework can be derived similar to the $L_2$ attack. 
More specifically, in each iteration, Theorem \ref{theorem} is applied to obtain the optimal $\bm \delta$ and $\bm{x}$. Then we solve Eqn. (\ref{adm1}) with Adam optimizer and update parameters through (\ref{supdate}).

\subsection{$L_1$ Attack}

For $L_1$ attack, the subproblem (\ref{Sub1}) has the form:
\begin{equation}\label{l1a}
\mathop {\min }\limits_{\bm{x}}  \quad \left\| \bm{x} -\bm{x_0}  \right\|_{1} + \frac{\rho }{2}\left\| {\bm{x}  - \bm{z} + \bm{s}} \right\|_2^2
\end{equation}
Problem (\ref{l1a}) has the closed-form solution. If we change the variable $\bm{\delta} = \bm x - \bm x_0$, then problem becomes
\begin{equation}\label{l1a_2}
\mathop {\min }\limits_{\bm{\delta}}  \quad \left\| \bm{\delta}  \right\|_{1} + \frac{\rho }{2}\left\| {\bm{\delta} + \mathbf x_0  - \bm{z} + \bm{s}} \right\|_2^2.
\end{equation}
The solution of problem \eqref{l1a_2} is given by the soft thresholding operator evaluated at the point $(\mathbf z - \mathbf s - \mathbf x_0 )$ with a parameter $1/\rho$ \cite{parikh2014proximal},
\begin{align}
    \bm{\delta}^* = \left(\mathbf z - \mathbf s- \mathbf x_0 - 1/\rho \right)_+ - \left(- (\mathbf z - \mathbf s- \mathbf x_0) - 1/\rho \right)_+,
\end{align}
where $(\cdot)_+$ is taken in elementwise, and $(x)_+ = x$ if $x \geq 0$, and $0$ otherwise. Therefore, the solution to problem \eqref{l1a} is given by
\begin{align}
    \mathbf x^* = \mathbf x_0 + \bm{\delta}^*.
\end{align}


The complete solution to the $L_1$ attack problem using the ADMM framework is similar to the $L_2$ attack. In each iteration, we obtain the closed-form solution of the first subproblem (\ref{l1a}) and then Adam optimizer is utilized to solve the second subproblem (\ref{adm1}). Next we update the parameters through Eqn. (\ref{supdate}). 

\subsection{$L_\infty$ Attack}

For $L_\infty$ attack, the subproblem (\ref{Sub1}) has the form:
\begin{equation}\label{lia}
\mathop {\min }\limits_{\bm{x}}  \quad \left\| \bm{x}-\bm{x_0}  \right\|_{\infty} + \frac{\rho }{2}\left\| {\bm{x}  - \bm{z} + \bm{s}} \right\|_2^2
\end{equation}
This problem does not have a closed form solution. One possible  method is to derive the KKT conditions of problem \eqref{lia} \cite{parikh2014proximal}. Here we use stochastic gradient decent methods  to solve it. In the experiments, we find that the Adam optimizer \cite{KingmaB2015adam} could achieve fast and robust convergence results. So Adam optimizer is utilized to solve Eqn. (\ref{lia}). Since Eqn. (\ref{lia}) is relatively simpler compared with Eqn. (\ref{adm1}), the complexity for solving Eqn. (\ref{lia}) with Adam optimizer is negligible. 

The complete solution to the $L_\infty$ attack problem using the ADMM framework can be derived similar to the $L_2$ attack. In the $k$-th iteration, we first use Adam optimizer to get the optimal  $\bm{x}^{k+1}$ in  Eq. (\ref{lia}). Then we solve Eq. (\ref{adm1}) and update parameters through Eq. (\ref{supdate}) as the $L_2$ attack.

\section{Performance Evaluation}

\begin{table*}\small
 \centering
  \caption{Adversarial attack success rate (ASR) and distortion of different $L_2$ attacks for different datasets} 
  \label{table_l2_all}
  \scalebox{1}[1]{
   \begin{threeparttable}
\begin{tabular}{c|c|c|c|c|c|c|c|c|c|c|c|c|c}
\toprule[1pt]
\multirow {2}{*}{Data Set}  &  \multirow {2}{*}{Attack Method}  & \multicolumn{4}{c|}{\makecell{Best Case}}   &  \multicolumn{4}{c|}{Average Case}   &  \multicolumn{4}{c}{Worst Case}   \\
  \cline{3-14}  
 &  & \multicolumn{1}{c}{ASR}  & \multicolumn{1}{c}{$\bm{L_2}$} & \multicolumn{1}{c}{$L_1$} & \multicolumn{1}{c|}{$L_\infty$} 
  & \multicolumn{1}{c}{ASR}  & \multicolumn{1}{c}{$\bm{L_2}$} & \multicolumn{1}{c}{$L_1$} & \multicolumn{1}{c|}{$L_\infty$} 
  & \multicolumn{1}{c}{ASR}  & \multicolumn{1}{c}{$\bm{L_2}$} & \multicolumn{1}{c}{$L_1$} & \multicolumn{1}{c}{$L_\infty$}  \\
\midrule[1pt]
\multirow {4}{*}{MNIST}  &  FGM($L_2$) & 99.4  & \textbf{2.245} &  25.84 & 0.574 & 34.6 & \textbf{3.284}   & 39.15 & 0.747 & 0 & \textbf{N.A.} & N.A. & N.A.\\
  \cline{2-14} 
&  IFGM($L_2$) & 100  & \textbf{1.58} &  18.51 & 0.388 & 99.9 & \textbf{2.50}  & 32.63 & 0.562 & 99.6 & \textbf{3.958} & 55.04  & 0.783\\
  \cline{2-14} 
 & C\&W($L_2$) & 100  &\textbf{ 1.393}  & 13.57 & 0.402 & 100 & \textbf{2.002}   & 22.31 & 0.54 & 99.9 & \textbf{2.598}  &  31.43 & 0.689\\
  \cline{2-14} 
&  ADMM($L_2$) & 100  & \textbf{1.288} & 13.87 & 0.345 & 100 & \textbf{1.873} & 22.52 & 0.498 & 100 &  \textbf{2.445}  &31.427 & 0.669\\
\midrule[1pt]
\multirow {4}{*}{CIFAR-10}&  FGM($L_2$) & 99.5  &  \textbf{0.421} & 14.13 & 0.05 & 42.8 & \textbf{1.157} &  39.5  & 0.136 & 0.7 &  \textbf{3.115}& 107.1 & 0.369\\
\cline{2-14}
 & IFGM($L_2$) & 100  & \textbf{0.191}  &6.549  & 0.022 & 100 & \textbf{0.432}  & 15.13  & 0.047 & 100 & \textbf{0.716} & 25.22 & 0.079\\
\cline{2-14}
&  C\&W($L_2$) & 100  & \textbf{0.178} & 6.03  & 0.019 & 100 & \textbf{0.347} &  12.115 & 0.0364 & 99.9 & \textbf{0.481}  & 16.75& 0.0536 \\
\cline{2-14}
&  ADMM($L_2$) & 100  & \textbf{0.173} & 5.8  & 0.0192 & 100 &  \textbf{0.337} &11.65  & 0.0365 & 100 & \textbf{0.476}  &  16.73 & 0.0535\\
\midrule[1pt]
\multirow {4}{*}{ImageNet}& FGM($L_2$) & 12 & \textbf{2.29}  &752.9 & 0.087 & 1 & \textbf{6.823} &  2338 & 0.25 & 0 & \textbf{N.A.} & N.A. & N.A.\\
\cline{2-14}
&  IFGM($L_2$) & 100 & \textbf{1.057} &  349.55 & 0.034 & 100 &  \textbf{2.461} & 823.52 & 0.083 & 98 & \textbf{4.448} & 1478.8  & 0.165\\
\cline{2-14}
&  C\&W($L_2$) & 100 & \textbf{0.48} & 142.4 & 0.016 &  100 &  \textbf{0.681} & 215.4 & 0.03 & 100 &\textbf{0.866}   & 275.4& 0.042\\
\cline{2-14}
&  ADMM($L_2$) & 100  & \textbf{0.416} &  117.3& 0.015 & 100 & \textbf{0.568}  &  177.6 & 0.022 & 97 & \textbf{0.701} & 229.08  & 0.0322\\
\bottomrule[1pt]
  \end{tabular}
\end{threeparttable}}
\end{table*}

The proposed ADMM attacks are compared with state-of-the-art attacks, including C\&W attacks \cite{carlini2017towards}, EAD attack, FGM and IFGM attacks, on three image classification datasets, MNIST \cite{Lecun1998gradient}, CIFAR-10 \cite{Krizhevsky2009learning} and ImageNet \cite{deng2009imagenet}. 
We also test our attacks against two defenses, defensive distillation \cite{papernot2016distillation} and adversarial training \cite{tram2018ensemble}, and evaluate the transferability of ADMM attacks.

\subsection{Experiment Setup and Parameter Setting}
Our experiment setup is based on C\&W attack setup for fair comparisons.
Two networks are trained for MNIST and CIFAR-10 datasets, respectively.   For the ImageNet dataset, a pre-trained network is utilized. 
The network architecture for MNIST and CIFAR-10 has four convolutional layers, two max pooling layers, two fully connected layers and a softmax layer. 
It can achieve 99.5\% accuracy on MNIST and 80\% accuracy on CIFAR-10. 
For ImageNet, a pre-trained Inception v3 network \cite{Szegedy2016RethinkingTI} is applied so there is no need to train our own model. The Google Inception model can achieve 96\% top-5 accuracy with image inputs of size $299\times 299\times 3$.
All experiments are conducted on machines with an Intel I7-7700K CPU, 32 GB RAM and an NVIDIA GTX 1080 TI GPU.

The implementations of FGM and IFGM are based on the CleverHans package \cite{papernot2016cleverhans}. The key distortion parameter $\epsilon $ is determined through a fine-grained grid search. For each image, the smallest $\epsilon $ in the grid leading to a successful attack is reported. 
For IFGM, we perform 10 FGM iterations. The distortion parameter $\epsilon'$ in each FGM iteration is set to be $\epsilon/10 $, which is quite effective shown in \cite{tram2018ensemble}.

The implementations of C\&W attacks and EAD attack are based on the github code released by the authors. The EAD attack has two decision rules when selecting the final adversarial example: the least elastic-net (EN) and $L_1$ distortion measurement ($L_1$). Usually, the $L_1$ decision rule can achieve lower $L_1$ distortion than the EN decision rule as the EN decision rule considers a mixture of $L_1$ and $L_2$ distortions. We use the $L_1$ decision rule for fair comparison.

\subsection{Attack Success Rate and Distortion for ADMM $L_2$ attack}

The ADMM $L_2$ attack is compared with FGM, IFGM and C\&W $L_2$ attacks. 
The attack success rate (ASR) represents the percentage of the constructed adversarial examples that are successfully classified as target labels. The average distortion of all successful adversarial examples is reported. 
For zero ASR, its distortion is not available (N.A.).
We craft adversarial examples on MNIST, CIFAR-10  and ImageNet. For MNIST and CIFAR-10, 1000 correctly classified images are randomly selected from the test sets and 9 target labels are tested for each image, so we perform 9000 attacks for each dataset using each attack method.
For ImageNet, 100 correctly classified images are randomly selected and 9 random target labels are used for each image.

The parameter $\rho$ is fixed to 20. The number of ADMM iterations is set to 10. In each ADMM iteration, Adam optimizer is utilized to solve the second subproblem based on stochastic gradient descent. When using Adam optimizer, we do binary search for 9 steps on the parameter $c$ (starting from 0.001) and runs 1000 learning iterations for each $c$ with learning rate 0.02 for MNIST and 0.002 for CIFAR-10 and ImageNet. The attack transferability parameter is set to $\kappa = 0$.

Table \ref{table_l2_all} shows the results on MNIST, CIFAR-10 and ImageNet. 
As we can see, FGM fails to generate adversarial examples with high success rate since it is designed to be fast, rather than optimal. 
Among IFGM, C\&W and ADMM $L_2$ attacks, ADMM achieves the lowest $L_2$ distortion for the best case, average case and worst case. 
IFGM has larger $L_2$ distortions compared with C\&W and ADMM attacks on the three datasets, especially on ImageNet. 
For MNIST, the ADMM attack can reduce the $L_2$ distortion by about 7\% compared with C\&W $L_2$ attack. This becomes more prominent on ImageNet that ADMM reduces $L_2$ distortion by 19\% comparing with C\&W in the worst case.


We also observe that on CIFAR-10, ADMM $L_2$ attack can achieve lower $L_2$ distortions but the reductions are not as prominent as that on MNIST or ImageNet. 
The reason may be that CIFAR-10 is the easiest dataset to attack since it requires the lowest $L_2$ distortion among the three datasets. 
So both ADMM $L_2$ attack and C\&W $L_2$ attack can achieve quite good performance.
Note that in most cases on the three datasets, ADMM $L_2$ attack can achieve lower $L_1$, $L_2$ and $L_\infty$ distorions than C\&W $L_2$ attack, indicating a comprehensive enhancement of the ADMM $L_2$ attack over C\&W $L_2$ attack.

\begin{table}\small
 \centering
  \caption{Adversarial attack success rate and distortion of ADMM and C\&W $L_0$ attacks for MNIST and CIFAR-10} 
  \label{admm_l0}
  \scalebox{1}[1]{
   \begin{threeparttable}
\begin{tabular}{c|c|c|c|c|c|c|c}
\toprule[1pt]
\multirow{2}{*}{\makecell{Dataset}} & \multirow{2}{*}{\makecell{Attack\\method}}& \multicolumn{2}{c|}{Best case} & \multicolumn{2}{c|}{Average case} & \multicolumn{2}{c}{Worst case} \\
\cline{3-8}
 &  & \multicolumn{1}{c}{ASR}  & \multicolumn{1}{c|}{$L_0$} & \multicolumn{1}{c}{ASR} & \multicolumn{1}{c|}{$L_0$} & \multicolumn{1}{c}{ASR} &  \multicolumn{1}{c}{$L_0$} \\
\midrule[1pt]
\multirow{2}{*}{\makecell{MNIST}} & C\&W($L_0$) & 100 & 8.1  &  100 & 17.48 & 100 & 31.48  \\
\cline{2-8}
 & ADMM($L_0$) & 100 & 8 & 100& 15.71 & 100 & 25.87 \\
\midrule[1pt]
\multirow{2}{*}{\makecell{CIFAR}} & C\&W($L_0$) & 100& 8.6 &100  & 19.6 &100 &  34.4 \\
\cline{2-8}
 & ADMM($L_0$)  & 100 & 8.25 & 100 & 18.8 &100 & 31.2 \\
 \bottomrule[1pt]
  \end{tabular}
\end{threeparttable}}
\end{table}

\begin{table*}\small
 \centering
  \caption{Adversarial attack success rate (ASR) and distortion of different $L_1$ attacks for different datasets} 
  \label{table_l1_all}
  \scalebox{1}[1]{
   \begin{threeparttable}
\begin{tabular}{c|c|c|c|c|c|c|c|c|c|c|c|c|c}
\toprule[1pt]
\multirow {2}{*}{Data Set}  &  \multirow {2}{*}{Attack Method}  & \multicolumn{4}{c|}{\makecell{Best Case}}   &  \multicolumn{4}{c|}{Average Case}   &  \multicolumn{4}{c}{Worst Case}   \\
  \cline{3-14}  
 &  & \multicolumn{1}{c}{ASR}  & \multicolumn{1}{c}{$\bm{L_1}$} & \multicolumn{1}{c}{$L_2$} & \multicolumn{1}{c|}{$L_\infty$} 
  & \multicolumn{1}{c}{ASR}  & \multicolumn{1}{c}{$\bm{L_1}$} & \multicolumn{1}{c}{$L_2$} & \multicolumn{1}{c|}{$L_\infty$} 
  & \multicolumn{1}{c}{ASR}  & \multicolumn{1}{c}{$\bm{L_1}$} & \multicolumn{1}{c}{$L_2$} & \multicolumn{1}{c}{$L_\infty$}  \\
\midrule[1pt]
\multirow {4}{*}{MNIST}  &  FGM($L_1$) & 100 & \textbf{29.6} & 2.42& 0.57& 36.5 & \textbf{51.2} & 3.99 & 0.8 & 0 & \textbf{N.A.} & N.A. & N.A.  \\
  \cline{2-14} 
&  IFGM($L_1$) & 100& \textbf{18.7}&1.6 &0.41 &100 &\textbf{33.9} &2.6 &0.58 &100 &\textbf{54.8} &4.04 &0.81 \\
  \cline{2-14} 
&  EAD($L_1$) & 100 & \textbf{7.08} & 1.49 & 0.56 & 100  & \textbf{12.5} & 2.08 &  0.77& 100 & \textbf{18.8} & 2.57 & 0.92    \\
  \cline{2-14} 
&  ADMM($L_1$)& 100 & \textbf{6.0} & 2.07 & 0.97 & 100 & \textbf{10.61} & 2.72 & 0.99 & 100  & \textbf{16.6} & 3.41 & 1  \\
\midrule[1pt]
\multirow {4}{*}{CIFAR-10}&  FGM($L_1$)& 98.5 & \textbf{18.25} & 0.53 & 0.057 & 47 & \textbf{48.32} & 1.373 & 0.142 & 1  & \textbf{33.99} & 0.956 & 0.101 \\
\cline{2-14}
 & IFGM($L_1$) & 100 & \textbf{6.28} & 0.184& 0.21& 100& \textbf{13.72}&0.394 & 0.44 &100 & \textbf{22.84} &0.65 & 0.74\\
\cline{2-14}
&  EAD($L_1$) & 100 & \textbf{2.44} & 0.31 & 0.084 & 100 & \textbf{6.392} & 0.6 & 0.185 & 100 & \textbf{10.21} & 0.865 & 0.31 \\
  \cline{2-14} 
&  ADMM($L_1$) & 100& \textbf{2.09} & 0.319 & 0.102 & 100  & \textbf{5.0} & 0.591 & 0.182 & 100 & \textbf{7.453} & 0.77 & 0.255  \\
\midrule[1pt]
\multirow {4}{*}{ImageNet}& FGM($L_1$) & 12 & \textbf{229} & 0.73  & 0.028 & 1 & \textbf{67} & 0.165 & 0.08 & 0 & \textbf{N.A.} & N.A. & N.A. \\
\cline{2-14}
&  IFGM($L_1$) &93 & \textbf{311} & 0.966 & 0.033& 67 & \textbf{498.5}& 1.5& 0.051& 47& \textbf{720.2} & 2.2& 0.08 \\
\cline{2-14}
&  EAD($L_1$) & 100 & \textbf{65.4} & 0.632 & 0.047 & 100 & \textbf{165.5} & 1.02 & 0.06 & 100 & \textbf{290} & 1.43 & 0.08  \\
  \cline{2-14} 
&  ADMM($L_1$) & 100 & \textbf{56.1} & 0.904 & 0.053 & 100 & \textbf{92.7} & 1.15 & 0.0784 & 100 & \textbf{142.1} & 1.473 &  0.102  \\
\bottomrule[1pt]
  \end{tabular}
\end{threeparttable}}
\end{table*}


\subsection{Attack Success Rate and Distortion for ADMM $L_0$ attack}
The performance of ADMM $L_0$ attack in terms of attack success rate and $L_0$ norm of distortion is demonstrated in this section. The ADMM $L_0$ attack is compared with C\&W $L_0$ attack on MNIST and CIFAR-10. 500 images are randomly selected from the test sets of MNIST and CIFAR-10, respectively. Each image has 9 target labels and we perform 4500 attacks for each dataset using either ADMM or C\&W $L_0$ attack.  

For ADMM $L_0$ attack, 9 binary search steps are performed to search for the parameter $\rho$ while $c$ is fixed to 20 for MNIST and 200 for CIFAR-10. The initial value of $\rho$ is set to 3 for MNIST and 40 for CIFAR-10, respectively.
The number of ADMM iterations is 10. 
In each ADMM iteration,  Adam optimizer is utilized to solve the second subproblem with 1000 Adam iterations while the learning rate is set to 0.01 for MNIST and CIFAR-10. 

The results of the $L_0$ attacks are shown in Table \ref{admm_l0}. As observed from the table, both C\&W and ADMM $L_0$ attacks can achieve 100\% attack success rate. For the best case, C\&W $L_0$ attack and ADMM $L_0$ attack have relatively close performance in terms of $L_0$ distortion. For the worst case, ADMM $L_0$ attack can achieve lower $L_0$ distortion than C\&W. 
ADMM $L_0$ attack reduces the $L_0$ distortion by up to 17\% on MNIST.
We also note that the differences between C\&W and ADMM $L_0$ attacks are smaller on CIFAR-10 than that on MNIST. 

\begin{table*}\small
 \centering
  \caption{Adversarial attack success rate (ASR) and distortion of different $L_\infty$ attacks for different datasets} 
  \label{table_li_all}
  \scalebox{1}[1]{
   \begin{threeparttable}
\begin{tabular}{c|c|c|c|c|c|c|c|c|c|c|c|c|c}
    \toprule[1pt]
\multirow {2}{*}{Data Set}  &  \multirow {2}{*}{Attack Method}  & \multicolumn{4}{c|}{\makecell{Best Case}}   &  \multicolumn{4}{c|}{Average Case}   &  \multicolumn{4}{c}{Worst Case}   \\
  \cline{3-14}  
 &  & \multicolumn{1}{c}{ASR}  & \multicolumn{1}{c}{$\bm{L_\infty}$} & \multicolumn{1}{c}{$L_1$} & \multicolumn{1}{c|}{$L_2$} 
  & \multicolumn{1}{c}{ASR}  & \multicolumn{1}{c}{$\bm{L_\infty}$} & \multicolumn{1}{c}{$L_1$} & \multicolumn{1}{c|}{$L_2$} 
  & \multicolumn{1}{c}{ASR}  & \multicolumn{1}{c}{$\bm{L_\infty}$} & \multicolumn{1}{c}{$L_1$} & \multicolumn{1}{c}{$L_2$}  \\
\midrule[1pt]
\multirow {3}{*}{MNIST}  &  FGM($L_\infty$) & 100 & \textbf{0.194} & 84.9 & 4.04 & 35 & \textbf{0.283} & 122.7 & 5.85 & 0 & \textbf{N.A.} & N.A. &N.A. \\
  \cline{2-14} 
&  IFGM($L_\infty$) & 100 &\textbf{0.148} & 50.9 & 2.48 &100 & \textbf{0.233} & 71.2&3.44&100&\textbf{0.378} &96.8&4.64 \\
  \cline{2-14} 
&  ADMM($L_\infty$) & 100 & \textbf{0.135} & 35.9 & 2.068 & 100 & \textbf{0.178} & 48 & 2.73 & 100 & \textbf{0.218} & 60.2& 3.37 \\
\midrule[1pt]
\multirow {3}{*}{CIFAR-10}&  FGM($L_\infty$) & 100 &\textbf{0.015} & 42.8& 0.78& 53& \textbf{0.48} &136 &2.5 &1.5 &\textbf{0.31} &712 &14 \\
\cline{2-14}
 & IFGM($L_\infty$) & 100 & \textbf{0.0063} & 14.36 & 0.28 &100 &\textbf{0.015} & 26.2 & 0.54 & 100 & \textbf{0.026} & 37.7 & 0.826 \\
\cline{2-14}
&  ADMM($L_\infty$) & 100& \textbf{0.0061}&12.8 &0.25 &100 &\textbf{0.0114} &23.07 & 0.47&100 &\textbf{0.017} &31.9 &0.65 \\
\midrule[1pt]
\multirow {3}{*}{ImageNet}& FGM($L_\infty$) & 20 & \textbf{0.0873} &22372 & 43.55  & 1.5 & \textbf{0.0005} &134 & 0.26  &0 & \textbf{N.A.} & N.A. & N.A.  \\
\cline{2-14}
&  IFGM($L_\infty$) & 100 & \textbf{0.0046} & 542.4 & 1.27 & 100 & \textbf{0.0128} & 1039.6 & 2.54 & 100& \textbf{0.0253} & 1790.2 & 4.4 \\
\cline{2-14}
&  ADMM($L_\infty$) & 100 & \textbf{0.0041} & 280.2 & 0.773 & 100 & \textbf{0.0059} & 427.7 & 1.10 & 100 & \textbf{0.0092} & 624.1 & 1.6  \\
\bottomrule[1pt]
  \end{tabular}
\end{threeparttable}}
\end{table*}

\subsection{Attack Success Rate and Distortion for ADMM $L_1$ attack}

We compare the ADMM $L_1$ attack with FGM, IFGM and EAD $L_1$ \cite{chen2017ead} attacks. 
The attack success rate (ASR) and the average distortion of all successful adversarial examples are reported. 
We perform the adversarial $L_1$ attacks  on MNIST, CIFAR-10  and ImageNet. For MNIST and CIFAR-10, 1000 correctly classified images are randomly selected from the test sets and 9 target labels are tested for each image, so we perform 9000 attacks for each dataset using each attack method. For ImageNet, 100 correctly classified images and 9 target labels are randomly selected.

The number of ADMM iterations is set to 80. In each ADMM iteration, Adam optimizer is utilized to solve the second subproblem based on stochastic gradient descent. When using Adam optimizer, we run 2000 learning iterations with initial learning rate 0.1 for MNIST and 0.001 for CIFAR-10 and ImageNet. The parameter $c$ is fixed to 2 for MNIST, 40 for CIFAR-10, and 200 for ImageNet. The parameter $\rho$ is fixed to 10 for MNIST, 300 for CIFAR-10, and 2000 for ImageNet. Note that we do not perform binary search of $c$ or $\rho$ as fixed $c$ and $\rho$ can achieve good performance.

The results of the ADMM $L_1$ attack are shown in Table \ref{table_l1_all}. We can observe that both EAD and ADMM $L_1$ attacks can achieve 100\% attack success rate while FGM $L_1$ attack has bad performance and IFGM $L_1$ attack can not guarantee 100\% ASR on ImageNet. ADMM $L_1$ attack can achieve the best performance compared with FGM, IFGM, and EAD $L_1$ attacks. As demonstrated in Table \ref{table_l1_all}, the $L_1$ distortion measurements of ADMM and EAD $L_1$ attacks are relatively close in the best case while the improvement of ADMM $L_1$ attack over EAD $L_1$ attack is much larger for the worst case. In the best case, the ADMM $L_1$ attack can craft adversarial examples with a $L_1$ norm about 14\% smaller than that of the EAD $L_1$ attack on MNIST, CIFAR-10 and ImageNet. For the worst case, the $L_1$ norm of ADMM $L_1$ attack is about 28\% lower on CIFAR-10 and 50\% lower on ImageNet compared with that of EAD $L_1$ attack.

\subsection{Attack Success Rate and Distortion for ADMM $L_\infty$ attack}

The ADMM $L_\infty$ attack is compared with FGM and IFGM $L_\infty$ attacks. 
The attack success rate (ASR) and the average distortion of all successful adversarial examples are reported. 
We perform the adversarial $L_\infty$ attacks  on MNIST, CIFAR-10  and ImageNet. For MNIST and CIFAR-10, 1000 correctly classified images are randomly selected from the test sets and 9 target labels are tested for each image, so we perform 9000 attacks for each dataset using each attack method. For ImageNet, 100 correctly classified images and 9 target labels are randomly selected.

The parameter $\rho$ is fixed to 0.1. The number of ADMM iterations is 100 and the batch size is 90. In each ADMM iteration, Adam optimizer is utilized to solve the first and second subproblem based on stochastic gradient descent. Adam optimizer runs 1000 iterations to get the solution of the first subproblem while it executes 2000 iterations to solve the second subproblem. Note that in the second subproblem, $c$ is fixed to 0.1 as we find fixed $c$ can achieve good performance and there is no need to perform binary search of $c$.
The initial learning rate is set to 0.001 for MNIST and 0.002 for CIFAR-10 and ImageNet. 
The attack transferability parameter is set to $\kappa = 0$ if we do not perform the transferability evaluation. 

The results of the ADMM $L_\infty$ attack are demonstrated in Table \ref{table_li_all}. We can observe that both IFGM and ADMM $L_\infty$ attacks can achieve 100\% attack success rate while FGM has bad performance. ADMM $L_\infty$ attack can achieve the best performance compared with FGM and IFGM $L_\infty$ attacks. We also note that the $L_\infty$ norms of ADMM and IFGM $L_\infty$ attacks are relatively close in the best case. Usually the $L_\infty$ distortion measure of ADMM attack is  smaller than that of IFGM attack by no larger than 10\% for the best case. In the worst case, the improvement of ADMM $L_\infty$ attack over IFGM $L_\infty$ attack is much more obvious. The $L_\infty$ distortion measure of ADMM attack is about 40\% smaller than that of IFGM attack on MNIST or CIFAR-10 dataset for the worst case. On ImageNet, the $L_\infty$ norm of ADMM attack is 64\% lower than that of IFGM attack.

\subsection{ADMM Attack Against Defensive Distillation and Adversarial Training}
ADMM attacks can break the undefended DNNs with high success rate. It is also able to break DNNs with defensive distillation. 
We perform C\&W $L_2$ attack, ADMM $L_0$, $L_1$, $L_2$ and $L_\infty$ attack for different temperature parameters on MNIST and CIFAR-10. 500 randomly selected images are used as source to generate 4500 adversarial examples with 9 targets for each image on MNIST or CIFAR-10. 
We find that the attack success rates of C\&W $L_2$ attack and ADMM four attacks for different temperature $T$ are all 100\%. Since distillation at temperature $T$ causes the value of logits to be approximately $T$ times larger while the relative values of logits remain unchanged, C\&W attack and ADMM attack which work on the relative values of logits do not fail.


We further test ADMM attack against adversarial training on MNIST.
C\&W $L_2$ attack and ADMM $L_2$ attack are utilized to separately generate 9000 adversarial examples with 1000 randomly selected images from the training set as sources. Then we add the adversarial examples with correct labels into the training dataset and retrain the network with the enlarged training dataset.
With the retained network, we perform ADMM attack on the adversarially trained networks (one with C\&W adversarial examples, and one with ADMM adversarial examples), as shown in Fig. \ref{advertraining}.
ADMM $L_2$ attack can break all three networks (one unprotected, one retained with C\&W adversarial examples, and one retained with ADMM adversarial examples) with 100\% success rate.
$L_2$ distortions on the latter two networks are higher than that on the first network, showing some defense effect of adversarial training. 
We also note that $L_2$ distortion on the third network is higher than the second network, which demonstrates higher defense efficiency of performing adversarial training with ADMM adversarial examples (partly because ADMM attack is stronger).

\begin{figure}[tbp]
	\center
	\includegraphics[ scale=0.4]{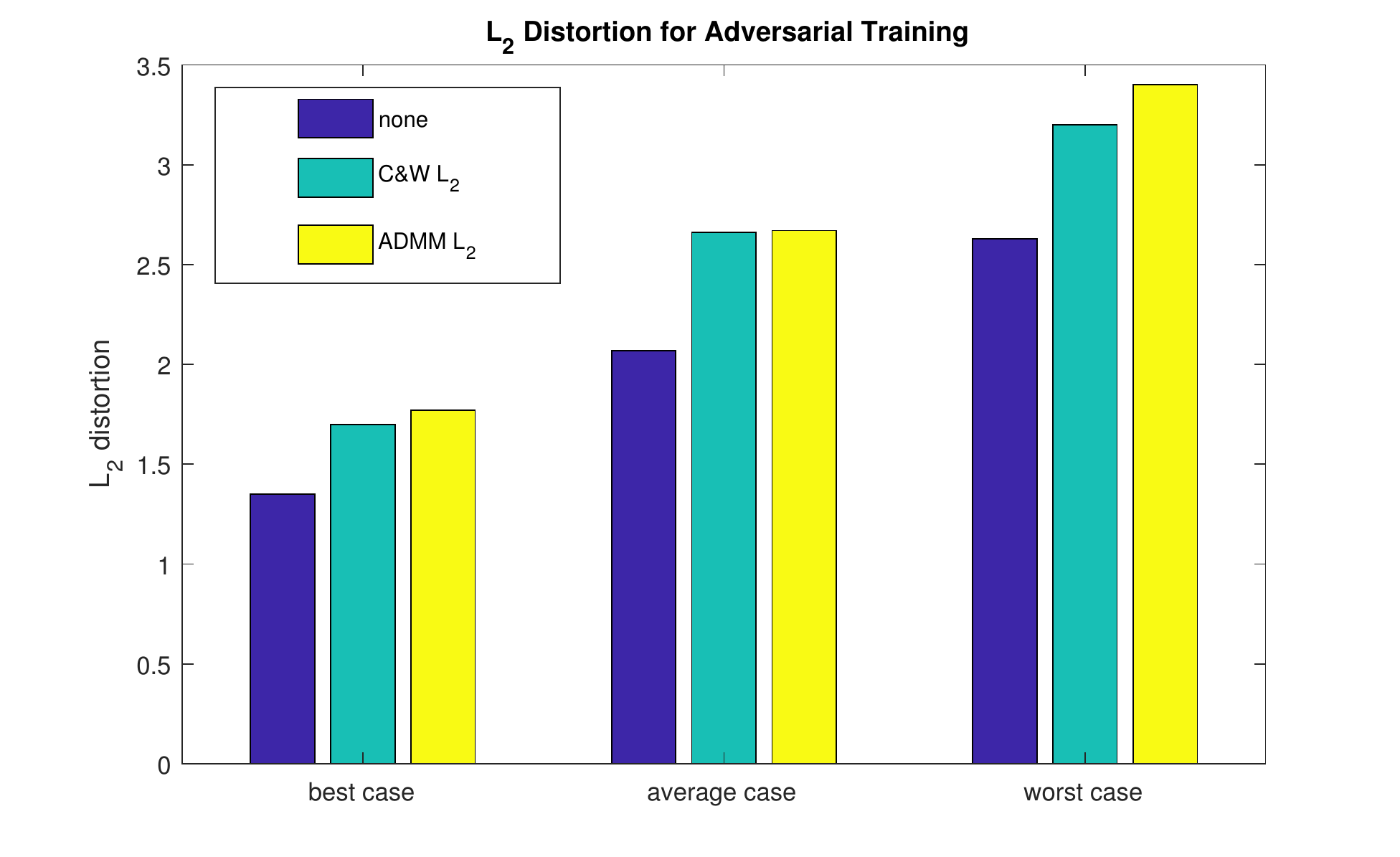}
	\caption{$L_2$ distortion of adversarial training for three cases on MNIST}
	\label{advertraining}
\end{figure}


\subsection{Attack Transferability}
Here we test the transferability of ADMM adversarial attack.
For each value of confidence parameter $\kappa$, we use ADMM $L_2$ attack and C\&W $L_2$ attack to generate 9000 adversarial examples on MNIST, respectively. 
Then these examples are applied to attack the defensively distilled network with temperature $T=100$. The ASR is reported in Fig. \ref{trans}. 
As demonstrated in Fig. \ref{trans}, when $\kappa$ is small, ADMM $L_2$ attack can hardly achieve success on the defensively distilled network, which means the generated adversarial examples are not strong enough to break the defended network. Low transferability of the generated adversarial examples is observed when  $\kappa$ is low. 
As $\kappa$ increases, the ASRs of the three cases increase, demonstrating increasing transferability. 
When $\kappa = 50$, the ASRs of three cases can achieve the maximum value. The  ASR of average case is nearly 98\%, meaning most of the generated adversarial examples on the undefended network can also break the defensively distilled network with $T = 100$. 
Also note that when $\kappa>50$, the ASRs of average case and worst case decrease as $\kappa$ increases. The reason is that it's quite difficult to generate adversarial examples even for the undefended network when $\kappa$ is very large. 
Thus an decrease on the ASR is observed for average case and worst case, and the advantages of strong transferable adversarial examples are mitigated by the difficulty to generate such strong attacks. 
We also note that when $\kappa > 40$, the ASRs of ADMM $L_2$ attack for average case and worst case are higher than the ASRs of C\&W $L_2$ attack, demonstrating higher transferability of the ADMM attack.

\begin{figure}[tbp]
	\center
	\includegraphics[ scale=0.36]{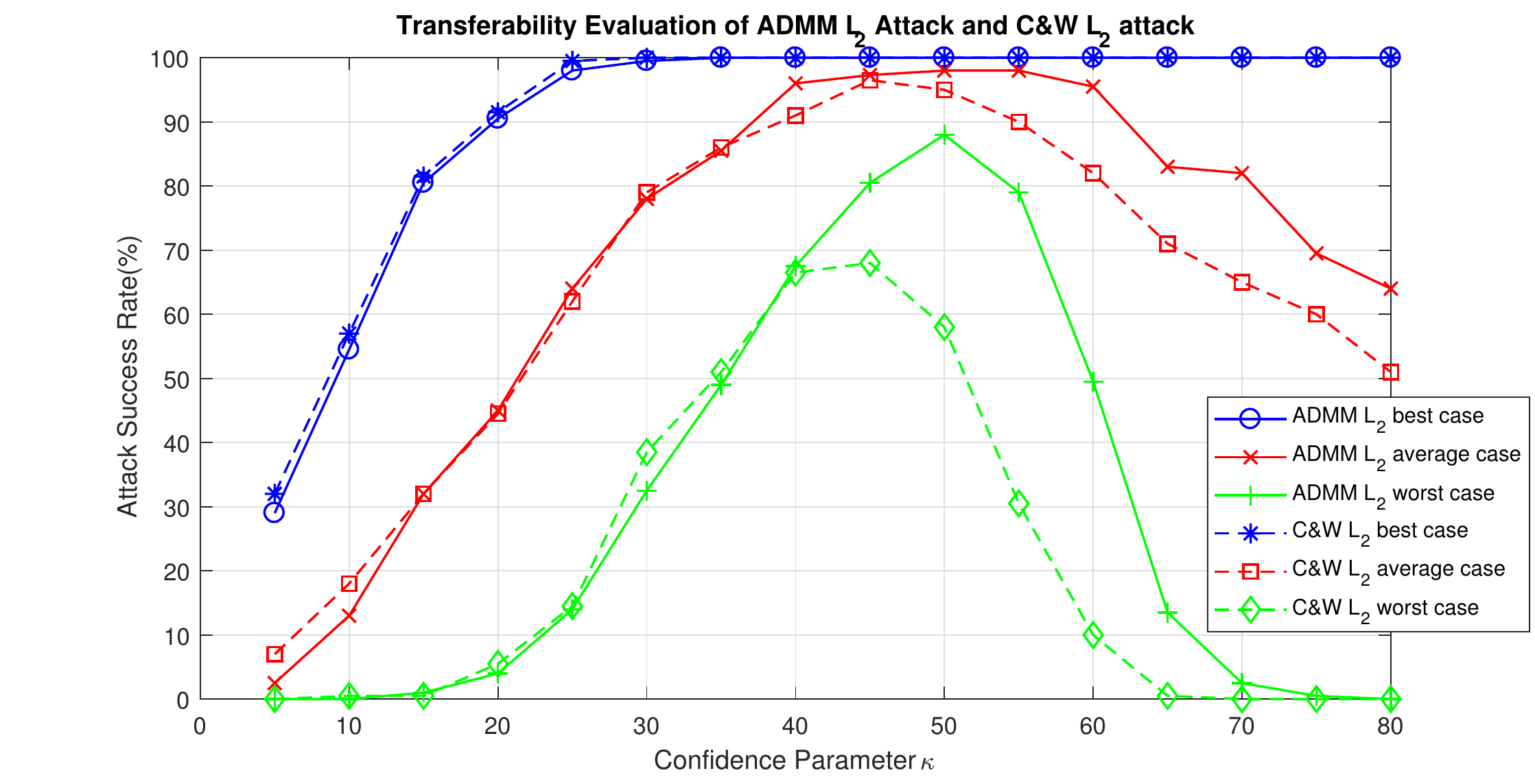}
	\caption{transferiablity evaluation of C\&W and ADMM $L_2$ attacks on MNIST}
	\label{trans}
\end{figure}

\section{Conclusion}

In this paper, we propose an ADMM-based general framework for adversarial attacks. Under the ADMM framework, $L_0$, $L_1$, $L_2$ and $L_\infty$ attacks are proposed and implemented. We compare the ADMM attacks with state-of-the-art adversarial attacks, showing ADMM attacks are so far the strongest. The ADMM attack is also applied to break two defense methods, the defensive distillation and adversarial training. Experimental results show the effectiveness of the proposed ADMM attacks with strong transferability.

\bibliographystyle{ACM-Reference-Format}
\bibliography{egbib}

\end{document}